\documentclass[conference]{IEEEtran}
\usepackage{times}
\usepackage{cite}
\usepackage[T1]{fontenc} % Use Type 1 fonts
\usepackage{textcomp}   % Additional symbol support
\usepackage{amsmath, amssymb}
\usepackage{graphicx}
\usepackage{subcaption}
\usepackage{color}
\usepackage{mathtools}
\usepackage{algorithm}
\usepackage{algorithmicx}
\usepackage{algpseudocode}
\usepackage{booktabs} 
\usepackage{caption}
\usepackage{url}
\usepackage{pifont}
\usepackage{xcolor}
\usepackage{footmisc}
\usepackage{caption}

\usepackage[numbers]{natbib}
\usepackage{multicol}
\usepackage[bookmarks=true]{hyperref}

\newcommand{\cN}{\mathcal{N}}

\newcommand{\mR}{\mathbb{R}}
\newcommand{\mE}{\mathbb{E}}

\newcommand{\mY}{{\mathbb Y}}

\newcommand{\dsdf}{d_{\mathrm{sdf}}}  % SDF query function
\newcommand{\Kelite}{K_{\mathrm{elite}}}  % Elite set size
\newcommand{\Kmax}{K_{\mathrm{max}}}  % Maximum iterations

\usepackage{amsthm}  
\newtheorem{theorem}{Theorem}

\newtheorem{lemma}{Lemma}

\newtheorem{remark}{Remark}

\title{PISTO: Proximal Inference for Stochastic Trajectory Optimization}

\IEEEoverridecommandlockouts
\author{ Hongzhe Yu, Zinuo Chang, and Yongxin Chen
\thanks{H. Yu and Y.\ Chen are with the School of Aerospace Engineering, Z. Chang is with the School of Electrical and Computer Engineering, Georgia Institute of Technology, Atlanta, GA; {\{hyu419, zchang40, yongchen\}@gatech.edu}}
}

\begin{document}

\maketitle

\begin{abstract}
Stochastic trajectory optimization methods like STOMP enable planning with non-differentiable costs, offering substantial flexibility over gradient-based approaches. We show that STOMP implicitly minimizes the KL divergence from a Boltzmann trajectory distribution, revealing an elegant Variational Inference (VI) structure underlying its updates. Building on this insight, we propose the \textit{Proximal Inference for Stochastic Trajectory Optimization} (PISTO) algorithm that stabilizes the updates by augmenting the objective with a KL regularization between successive Gaussian proposals. This proximal formulation admits a trust-region interpretation and yields closed-form mean updates computable as expectations under a surrogate distribution. We estimate these expectations via importance-weighted Monte Carlo sampling, producing a simple, derivative-free algorithm that inherits STOMP's ability to handle non-differentiable and discontinuous costs without modification. On robot arm motion planning benchmarks, PISTO achieves an 89\% success rate---outperforming CHOMP (63\%) and STOMP (68\%)---while producing shorter, smoother paths at twice the speed of competing stochastic methods. We further validate PISTO on contact-rich MuJoCo locomotion and manipulation tasks, where it consistently outperforms both CEM and MPPI baselines in reward.
\end{abstract}

\IEEEpeerreviewmaketitle

\section{Introduction}
Motion planning is central to robotics, enabling autonomous systems to generate collision-free, dynamically feasible trajectories in cluttered environments. While sampling-based methods such as PRM and RRT$^\star$~\cite{karaman2011sampling, phillips2012asymptotically} provide strong exploration and asymptotic guarantees, their trajectories often require substantial post-processing for smoothness and dynamic feasibility. Consequently, trajectory optimization has become essential for high-performance planning in manipulation, mobile robotics, and aerial systems.

Optimization-based planners—including CHOMP~\cite{ratliff2009chomp, zucker2013chomp}, sequential convex programming~\cite{schulman2014motion}, and Gaussian process methods~\cite{mukadam2017gaussian, mukadam2018gaussian}—represent trajectories in continuous time and optimize for smoothness, collision avoidance, and dynamic constraints. These approaches achieve impressive speed and scalability, but they rely on differentiable cost representations and remain susceptible to local minima in highly nonconvex environments.

Stochastic trajectory optimization broadens the class of tractable objectives by introducing sampling. STOMP~\cite{kalakrishnan2011stomp} and path-integral methods such as PI$^2$~\cite{theodorou2010generalized} perturb candidate trajectories and update them via cost-weighted averaging, enabling optimization under non-differentiable and discontinuous costs. A natural question arises: what objective does STOMP implicitly optimize?

We show that STOMP minimizes the reverse KL divergence from a Boltzmann trajectory distribution, revealing an elegant variational inference structure underlying its updates. This connection places STOMP within a broader inference-based planning framework~\cite{kappen2005path, todorov2009efficient, chen2016relation}, where trajectory costs define a Boltzmann distribution whose high-probability mass concentrates on desirable plans. Gaussian variational inference formulations (GVIMP)~\cite{yu2023gvimp} approximate this distribution with a structured Gaussian family, providing uncertainty quantification and a principled KL-divergence objective.

Building on this insight, we develop a proximal inference algorithm that stabilizes STOMP's updates. Proximal methods~\cite{parikh2014proximal, chen2022improved} augment each iterate with a penalty on the distance from the current solution, effectively creating a trust region that prevents overly aggressive steps. In the distributional setting, this naturally extends to KL-divergence penalties between successive proposals, controlling how rapidly the trajectory distribution evolves. By combining the variational inference perspective with proximal regularization, we obtain closed-form mean updates computable via importance-weighted Monte Carlo sampling—a simple, derivative-free algorithm that inherits STOMP's flexibility while offering principled stability guarantees. 

\textbf{\textit{Contributions:}} 
\begin{itemize}
    \item We reveal that STOMP implicitly minimizes KL divergence from a Boltzmann trajectory distribution, establishing its variational inference structure and motivating principled algorithmic improvements.
    
    \item We propose \emph{Proximal Inference for Stochastic Trajectory Optimization (PISTO)}, which augments the reverse-KL objective with a proximal penalty between successive Gaussian proposals. This formulation admits a trust-region interpretation, stabilizes optimization dynamics, and yields closed-form mean updates computable via importance-weighted Monte Carlo sampling.
    
    \item We demonstrate PISTO's effectiveness across diverse robotics tasks: on motion planning benchmarks, PISTO achieves 89\% success rate—outperforming CHOMP (63\%), STOMP (68\%), and natural gradient descent (76\%)—while running twice as fast as competing stochastic methods. On contact-rich MuJoCo tasks, PISTO consistently outperforms CEM and MPPI baselines in reward despite non-differentiable dynamics.
\end{itemize}

\section{Related Work}
\label{sec:related_work}

\paragraph{Planning as Inference}
The interpretation of optimal control as probabilistic inference connects to linearly-solvable control~\cite{todorov2009efficient}, path-integral methods~\cite{kappen2005path}, and variational formulations in RL~\cite{levine2018reinforcement}. Within motion planning, Toussaint~\cite{toussaint2009robot} applied message passing on graphical models, while Stein Variational approaches~\cite{power2024constrained} maintain trajectory distributions via particle-based inference. Our work builds on the Planning as Inference paradigm~\cite{yu2023gvimp, cosier2024unifying, petrovic2019stochastic, petrovic2022mixtures, chang2025efficient} , which showed that trajectory optimization with control-energy regularization is equivalent to KL-divergence minimization from a Boltzmann distribution. We solve the inference problem using a proximal inference algorithm that provides stable optimization and efficient Monte Carlo estimation. Other works under this paradigm are closely related. The variational and heteroscedasticity GP planners \cite{cosier2024unifying, petrovic2019stochastic} either rely on pre-defined kernels, induction point approximations, or start and goal states, which cannot be easily extended to control space optimization. Mixtures of Gaussian Processes for Trajectory Optimization (MGPTO) \cite{petrovic2022mixtures} obtains a multi-modal trajectory using a STOMP-style cost-weighted stochastic gradient estimate. P-GVIMP \cite{chang2025efficient} leverages GPU parallel computation to accelerate the mean and covariance updates in a gradient-descent landscape.

\paragraph{Trajectory Optimization}
Gradient-based methods such as CHOMP~\cite{ratliff2009chomp, zucker2013chomp}, TrajOpt~\cite{schulman2014motion}, and GPMP~\cite{mukadam2017gaussian} achieve fast convergence but require differentiable costs. Sampling-based methods relax this requirement: STOMP~\cite{kalakrishnan2011stomp} uses cost-weighted averaging, path integral methods (PI$^2$~\cite{theodorou2010generalized}, MPPI~\cite{williams2017mppi}) derive updates from stochastic control, and CEM~\cite{rubinstein1999cem} fits distributions to elite samples. Our method differs by: (i) revealing STOMP's variational inference structure via KL-divergence; (ii) introducing proximal regularization for trust-region stability; and (iii) deriving closed-form updates amenable to importance sampling.

Our method introduces three key innovations over these approaches: (i) an explicit connection between STOMP's objective and KL-divergence minimization, revealing its variational inference structure; (ii) proximal regularization that stabilizes updates with a trust-region interpretation; and (iii) a reversed KL formulation yielding closed-form mean updates amenable to unbiased importance sampling. 

\section{Stochastic Trajectory Optimization as Variational Inference Planning}
\label{sec:framework}

In this section, we establish the theoretical foundation of our approach by connecting classical stochastic trajectory optimization with variational inference. We begin by introducing the STOMP framework, followed by its interpretation as a variational inference problem.

\subsection{Stochastic Trajectory Optimization (STOMP)}
Stochastic Trajectory Optimization for Motion Planning (STOMP) \cite{kalakrishnan2011stomp} is a gradient-free optimization framework designed to handle non-differentiable and discontinuous cost functions. It explores the trajectory space by generating stochastic perturbations around a nominal trajectory and updating it based on the exponentiated cost of these samples.

Let $Y \in \mathbb{R}^{n(T+1)}$ represent the mean of a $T$-length state trajectory, defined as $Y \triangleq \{x_t\}_{t=0}^T$. STOMP seeks to minimize the expectation of a cost functional over a proposal Gaussian distribution $\tilde{Y} \sim \mathcal{N}(Y, \Sigma)$:
\begin{equation}
\label{eq:obj_stomp}
\min_Y \mathcal{J}_1(Y) \triangleq \mathbb{E}_{\tilde{Y}} \left[ S(\tilde{Y}) + \frac{1}{2} \tilde{Y}^\top R \tilde{Y} \right],
\end{equation}
where $S(\tilde{Y}) = \sum_{t=0}^T V(\tilde{Y}_t)$ is the state-dependent potential (e.g., obstacle avoidance) and $R = A^\top A$ is the control cost matrix for double integrator dynamics, with $A \in \mathbb{R}^{(T-1) \times (T+1)}$ the finite-difference acceleration operator applying the stencil $[1,-2,1]/\Delta t^2$ at interior nodes.

This stochastic formulation allows the planner to "smooth" the cost landscape, effectively escaping local minima that often trap purely gradient-based methods.

\subsection{Variational Inference Interpretation of STOMP}
The objective \eqref{eq:obj_stomp} admits a Variational Inference (VI) interpretation that recasts trajectory optimization as approximating an optimal path distribution. Any cost function over trajectories induces a Boltzmann distribution
\begin{equation}
p^\star(\tilde{Y}) \propto \exp\left( -S(\tilde{Y}) \right),
\end{equation}
where low-cost trajectories receive high probability mass; this construction underlies the connection between optimal control and inference~\cite{kappen2005path, todorov2009efficient} and path-integral methods~\cite{theodorou2010generalized}. In \eqref{eq:obj_stomp}, the state-dependent potential $S(\tilde{Y})$ captures task-specific objectives such as obstacle avoidance, while the control cost $\frac{1}{2}\tilde{Y}^\top R \tilde{Y}$ with $R = A^\top A$ corresponds to a Gaussian smoothness prior $\mathcal{N}(0, R^{-1})$ favoring low-acceleration paths. Combining the two yields the main result connecting STOMP to VI.

\begin{theorem}[Variational Inference Formulation of STOMP]
\label{thm:stomp_vi}
Consider the stochastic trajectory optimization objective \eqref{eq:obj_stomp}. Let $\Sigma \succ 0$ be a fixed covariance and $R \succ 0$ be the control cost matrix. Minimizing $\mathcal{J}_1$ over the mean trajectory $Y$ is equivalent to solving the variational inference problem:
\begin{equation}
\label{eq:obj_STOMP_KL}
\min_Y \; D_{\mathrm{KL}}\left( \mathcal{N}(Y, \Sigma) \;\|\; \mY^\star \right),
\end{equation}
where the target distribution is the energy-based posterior:
\begin{equation}
\mY^\star \propto \exp\left( -S(\tilde{Y}) \right) \mathcal{N}(0, R^{-1}).
\end{equation} 
\end{theorem}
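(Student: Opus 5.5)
The plan is to expand the KL divergence in \eqref{eq:obj_STOMP_KL} directly and show that, as a function of $Y$, it differs from $\mathcal{J}_1(Y)$ only by an additive constant. Write $q_Y = \mathcal{N}(Y,\Sigma)$ and the unnormalized target $\tilde p(\tilde Y) = \exp(-S(\tilde Y))\exp(-\tfrac12 \tilde Y^\top R \tilde Y)$. The Gaussian normalizer $(2\pi)^{-n(T+1)/2}\det(R)^{1/2}$ is absorbed into a global normalizing constant $Z$. Then $\mY^\star = \tilde p / Z$, where $Z = \int \tilde p$. We assume $Z<\infty$; this holds, for instance, when $S$ is bounded below, since $R\succ 0$ makes the Gaussian factor integrable.

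First, I will write
\[
D_{\mathrm{KL}}(q_Y\|\mY^\star) = \mE_{q_Y}[\log q_Y(\tilde Y)] - \mE_{q_Y}[\log \tilde p(\tilde Y)] + \log Z .
\]
Second, I will evaluate the entropy term. Since $\tilde Y - Y \sim \mathcal{N}(0,\Sigma)$ regardless of $Y$,
\[
\mE_{q_Y}[\log q_Y] = -\tfrac12\log\det(2\pi e\,\Sigma),
\]
which is independent of $Y$ because $\Sigma$ is fixed. This is the key point: with the covariance frozen, the negative entropy contributes nothing to the optimization over the mean.

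Third, I will expand $-\mE_{q_Y}[\log\tilde p] = \mE_{q_Y}\bigl[S(\tilde Y) + \tfrac12\tilde Y^\top R\tilde Y\bigr]$, which is exactly $\mathcal{J}_1(Y)$ by \eqref{eq:obj_stomp}. Combining the three pieces gives
\[
D_{\mathrm{KL}}(q_Y\|\mY^\star) = \mathcal{J}_1(Y) - \tfrac12\log\det(2\pi e\,\Sigma) + \log Z .
\]
Here the last two terms are constants in $Y$. Hence the two objectives have the same minimizers, and their optimal values differ by that constant.

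The main obstacle is not algebraic but technical: the expectations must be well defined. I will need $\mE_{q_Y}[|S(\tilde Y)|]<\infty$ for every $Y$, for example if $S$ has at most polynomial growth or is bounded, as for obstacle potentials. I will also need $Z<\infty$, so that $\mY^\star$ is a proper density and the KL is finite. The theorem gives $R\succ 0$ as an assumption; I will use it to guarantee that $\mathcal{N}(0,R^{-1})$ exists and that the quadratic term has finite expectation $\tfrac12 Y^\top R Y + \tfrac12\mathrm{tr}(R\Sigma)$. I will state these integrability conditions explicitly. If $\mathcal{J}_1(Y)=+\infty$ for some $Y$, the identity still holds in the extended reals, so equivalence of the minimization problems is preserved.
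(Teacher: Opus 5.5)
Your proof is correct and follows essentially the same route as the paper: both show that $\mathcal{J}_1(Y)$ and $D_{\mathrm{KL}}(\mathcal{N}(Y,\Sigma)\,\|\,\mY^\star)$ differ by a $Y$-independent constant, because the entropy of $\mathcal{N}(Y,\Sigma)$ does not depend on $Y$ once $\Sigma$ is fixed. The paper splits off $\tfrac12\operatorname{tr}(R\Sigma)$ and matches against the closed-form Gaussian KL to $\mathcal{N}(0,R^{-1})$, whereas you identify the cross-entropy term with $\mathcal{J}_1$ directly and track $\log Z$ explicitly, which is slightly cleaner and makes the needed integrability assumptions visible.
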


\begin{proof}
See Appendix \ref{sec:appendix_proofs}-(a).
\end{proof}

\section{Proximal Inference for Stochastic Trajectory Optimization}
\label{sec:methods}
We introduce a novel paradigm for solving the motion planning inference problem: Proximal Inference for Stochastic Trajectory Optimization (PISTO). PISTO reverses the argument order in the KL divergence and augments the objective with a Gaussian proximal term, converting each iteration into a proximal inference minimization whose optimizer admits a closed-form expectation representation amenable to Monte Carlo estimation. 

\subsection{Solution to the Reverse KL Minimization Problem}
We first introduce an important moment-matching solution result in reverse KL minimization problems.
\begin{lemma}[Reverse KL Minimization and Moment-matching Solution]
\label{lem:cem}
Consider the reverse KL objective obtained by swapping distributions in \eqref{eq:obj_STOMP_KL}:
\begin{equation}
\label{eq:obj_STOMP_reverseKL}
\min_Y \; D_{\mathrm{KL}} \left( \mY^\star \parallel \cN(Y, \Sigma) \right),
\end{equation}
where $\mY^\star \propto e^{-S(\Tilde{Y})} \cN(0, R^{-1})$. The gradient of the objective \eqref{eq:obj_STOMP_reverseKL} with respect to $Y$ is
\begin{align}
\label{eq:grad_CE}
    \nabla_Y D_{\mathrm{KL}} \left( \mY^\star \parallel \cN(Y, \Sigma) \right) 
    & = \mE_{\mY^\star} \left[ - \Sigma^{-1} (\Tilde{Y} - Y) \right],
\end{align}
and the optimal mean is given by
\begin{equation}
Y^\star = \mE_{\mY^\star}[\Tilde{Y}].
\end{equation}
With the choice $\Sigma = R^{-1}$ and proposal distribution $\Tilde{Y} \sim \cN(Y, R^{-1})$, the importance sampling estimator is
\begin{equation}
\label{eq:importance_sampling_weight}
\hat{Y}^\star = \sum_{m=1}^{M} \bar{w}_m (Y + \varepsilon_m), \quad \bar{w}_m = \frac{e^{-(S(Y + \varepsilon_m) + \varepsilon_m^\top R Y)}}{\sum_{j=1}^{M} e^{-(S(Y + \varepsilon_j) + \varepsilon_j^\top R Y)}},
\end{equation}
where $\varepsilon_m \sim \cN(0, R^{-1})$ are i.i.d.\ samples.
\end{lemma}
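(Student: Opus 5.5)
The plan is to write the reverse KL as an entropy term plus a cross-entropy term, and observe that only the cross-entropy term depends on $Y$. This gives the gradient directly. The optimum then follows from convexity, and the estimator from a standard self-normalized importance sampling computation with the specific choice $\Sigma = R^{-1}$.

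\textbf{Gradient and optimum.} Write $\mY^\star(\tilde Y) = Z^{-1} e^{-S(\tilde Y)}\cN(\tilde Y;0,R^{-1})$. Then
\begin{equation*}
D_{\mathrm{KL}}(\mY^\star\,\|\,\cN(Y,\Sigma)) = \mE_{\mY^\star}[\log \mY^\star] + \tfrac{1}{2}\log\det(2\pi\Sigma) + \tfrac{1}{2}\mE_{\mY^\star}\big[(\tilde Y - Y)^\top \Sigma^{-1}(\tilde Y - Y)\big].
\end{equation*}
The first two terms do not depend on $Y$. Differentiating the quadratic term under the integral gives \eqref{eq:grad_CE}.

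To justify exchanging derivative and expectation, I will assume $S$ is bounded below, which is standard for obstacle potentials. Then $\mY^\star$ is dominated by a multiple of the Gaussian $\cN(0,R^{-1})$, with $R\succ 0$, so it is normalizable and has finite first and second moments. Dominated convergence then applies to the integrand $(\tilde Y-Y)^\top\Sigma^{-1}(\tilde Y-Y)$ and to its $Y$-derivative, locally uniformly in $Y$. I expect this to be the only step needing care; the rest is algebra.

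The objective is a strictly convex quadratic in $Y$, since its Hessian is $\Sigma^{-1}\succ 0$. Its unique stationary point solves $\Sigma^{-1}(\mE_{\mY^\star}[\tilde Y]-Y)=0$, that is, $Y^\star=\mE_{\mY^\star}[\tilde Y]$. This holds independently of $\Sigma$.

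\textbf{Importance sampling estimator.} I will estimate $\mE_{\mY^\star}[\tilde Y]$ by self-normalized importance sampling, using the proposal $q=\cN(Y,R^{-1})$. This avoids needing the normalizer $Z$. The unnormalized weight is
\begin{equation*}
\frac{e^{-S(\tilde Y)}\exp(-\tfrac12\tilde Y^\top R\tilde Y)}{\exp(-\tfrac12(\tilde Y-Y)^\top R(\tilde Y-Y))} = e^{-S(\tilde Y)}\exp\big(-\tilde Y^\top R Y + \tfrac12 Y^\top R Y\big).
\end{equation*}
Here the Gaussian normalizing constants cancel because both covariances equal $R^{-1}$. Substituting $\tilde Y = Y+\varepsilon_m$ gives the exponent $-S(Y+\varepsilon_m)-\varepsilon_m^\top R Y-\tfrac12 Y^\top R Y$. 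The last term is common to all samples and cancels under self-normalization, which yields exactly $\bar w_m$ in \eqref{eq:importance_sampling_weight}.

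Finally, the self-normalized estimator $\sum_m \bar w_m (Y+\varepsilon_m)$ is consistent for $\mE_{\mY^\star}[\tilde Y]$ as $M\to\infty$. This follows from the strong law of large numbers applied to numerator and denominator separately. It requires the weights to be integrable under $q$, which holds because $e^{-S}$ is bounded and $\varepsilon^\top R Y$ is Gaussian-distributed, so its exponential has finite moments.
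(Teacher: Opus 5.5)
Your proposal is correct and is essentially the standard argument the paper relies on; the paper does not prove the lemma itself but defers to the information-theoretic MPPI reference of Williams et al. That argument has the same structure as yours. First, isolate the $Y$-dependent cross-entropy term to obtain the gradient and the moment-matching optimum $Y^\star=\mE_{\mY^\star}[\tilde Y]$. Then form self-normalized importance weights against $\cN(Y,R^{-1})$, where the Gaussian normalizers cancel and the common factor $e^{-\frac{1}{2}Y^\top R Y}$ drops out. Your added assumption that $S$ is bounded below, used to justify differentiating under the integral and the consistency (not unbiasedness) of the self-normalized estimator, is a reasonable way to make rigorous what the paper leaves implicit.
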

The proof of Lemma \ref{lem:cem} can be found in~\cite{williams2018information}.
\subsection{The main PISTO Formulation}
\begin{theorem}[Proximal Inference Update]
\label{thm:prox_vi}
Consider the proximal update for the VI problem \eqref{eq:obj_STOMP_KL}:
\begin{align}
Y_{k+1} = \arg\min_Y \; & D_{\mathrm{KL}} \left( \cN(Y,\Sigma) \parallel \mY^\star \right) 
\\
&+ \frac{1}{\eta} D_{\mathrm{KL}}\left(\cN(Y,\Sigma) \parallel \cN(Y_k,\Sigma)\right),
\end{align}
where $\eta > 0$ is the step size parameter, $Y_k$ is the current iterate, and $\mY^\star \propto e^{-S(\Tilde{Y})} \cN(0, R^{-1})$ is the target distribution. Then the proximal update is equivalent to KL projection onto a surrogate distribution:
\begin{equation}
\label{eq:prox_KL}
Y_{k+1} = \arg\min_Y \; D_{\mathrm{KL}} \left( \cN(Y,\Sigma) \parallel \mY^\star_k \right),
\end{equation}
where the surrogate target $\mY^\star_k$ admits the explicit form
\begin{equation}
\mY^\star_k \propto \left( e^{-S(\Tilde{Y})} \cN(0, R^{-1}) \right)^{\frac{\eta}{\eta+1}} \left( \cN(Y_k, \Sigma) \right)^{\frac{1}{\eta+1}}.
\end{equation}
\end{theorem}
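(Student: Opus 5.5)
The plan is to write both KL terms as expectations under the same variational density $q_Y \triangleq \cN(Y,\Sigma)$ and then merge them into a single KL divergence. Because $\Sigma$ is fixed, the entropy of $q_Y$ does not depend on $Y$. Each KL therefore splits into a $Y$-independent entropy part plus a cross-entropy part, and this is what lets the two terms combine.

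First expand the two terms separately:
\begin{align*}
D_{\mathrm{KL}}(q_Y \parallel \mY^\star) &= \mE_{q_Y}[\log q_Y] + \mE_{q_Y}\left[S(\Tilde{Y}) - \log \cN(\Tilde{Y};0,R^{-1})\right] + \log Z,\\
D_{\mathrm{KL}}(q_Y \parallel \cN(Y_k,\Sigma)) &= \mE_{q_Y}[\log q_Y] - \mE_{q_Y}\left[\log \cN(\Tilde{Y};Y_k,\Sigma)\right].
\end{align*}
Here $Z$ is the normalizer of $\mY^\star$.

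Next, add the two lines with weights $1$ and $1/\eta$, and multiply the whole objective by the positive constant $\eta/(\eta+1)$. This rescaling does not change the minimizer. The entropy terms combine with total coefficient $\frac{\eta}{\eta+1}\left(1+\frac{1}{\eta}\right) = 1$. The remaining terms equal
\[
-\mE_{q_Y}\left[\tfrac{\eta}{\eta+1}\log\left(e^{-S(\Tilde{Y})}\cN(\Tilde{Y};0,R^{-1})\right) + \tfrac{1}{\eta+1}\log \cN(\Tilde{Y};Y_k,\Sigma)\right],
\]
which is $-\mE_{q_Y}[\log \tilde{p}_k(\Tilde{Y})]$ for the unnormalized density
\[
\tilde{p}_k = \left(e^{-S}\cN(0,R^{-1})\right)^{\frac{\eta}{\eta+1}}\cN(Y_k,\Sigma)^{\frac{1}{\eta+1}}.
\]
Let $Z_k = \int \tilde{p}_k$ and $\mY^\star_k = \tilde{p}_k/Z_k$. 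The rescaled objective is then $D_{\mathrm{KL}}(q_Y\parallel \mY^\star_k) - \log Z_k$ plus constants independent of $Y$. The two problems therefore share their argmin, which gives \eqref{eq:prox_KL}.

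The one real obstacle is showing that $\mY^\star_k$ is a well-defined probability density, i.e.\ that $0<Z_k<\infty$.
\begin{itemize}
\item Positivity of $Z_k$ is immediate, since $\tilde{p}_k>0$ everywhere.
\item For finiteness, assume $S$ is bounded below, as is standard for the potentials considered here (e.g.\ nonnegative obstacle costs). Then $\tilde{p}_k$ is bounded above by a constant times a product of Gaussian powers. Both $R\succ 0$ and $\Sigma\succ 0$, and the exponents $\frac{\eta}{\eta+1}$ and $\frac{1}{\eta+1}$ are positive, so this product is itself proportional to a Gaussian and is integrable.
\item The same lower bound on $S$, together with Gaussian moments of $q_Y$, makes all the expectations above finite. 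This justifies splitting the KL terms.
\end{itemize}
I would state this mild assumption explicitly. The remaining calculations are routine bookkeeping of constants.
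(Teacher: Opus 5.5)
Your proposal is correct and follows essentially the same route as the paper: both write the objective as a single expectation under $\cN(Y,\Sigma)$, factor out $\frac{\eta+1}{\eta}$, and identify $(\mY^\star)^{\frac{\eta}{\eta+1}}\cN(Y_k,\Sigma)^{\frac{1}{\eta+1}}$ as the surrogate target. Your check that $0<Z_k<\infty$ (assuming $S$ is bounded below) is extra rigor that the paper omits. Your remark that the $Y$-independent entropy is what lets the terms combine is not actually needed, because the coefficient matching holds for any variational density $q$.
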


\begin{proof}
Appendix \ref{sec:appendix_proofs}-(c).
\end{proof}

\begin{remark}
The surrogate distribution $\mY^\star_k$ geometrically interpolates between the true target $\mY^\star$ and the current Gaussian approximation $\cN(Y_k, \Sigma)$. As $\eta \to \infty$, the surrogate converges to the true target, recovering standard VI. For finite $\eta$, the proximal term regularizes the update, improving numerical stability and convergence.
\end{remark}

\subsection{Moment-matching Solution to the Proximal Inference}

Eq.~\eqref{eq:prox_KL} indicates that each proximal iteration is itself a VI problem with respect to the surrogate distribution \(\mY^\star_k\), which is typically easier to handle than the original Boltzmann distribution. To compute the update, we solve the reverse KL-minimization projection
\begin{equation}
\label{eq:CE_prob_step_k}
Y_{k+1}=\arg\min_Y D_{\mathrm{KL}} \left( \mY^\star_k  \parallel \cN(Y,\Sigma)\right).
\end{equation}
To sample from \(\mY^\star_k\), we rewrite it in a tilted-Gaussian form by completing the square. Letting $\gamma \triangleq \frac{\eta}{\eta+1}$, we obtain
\begin{align}
    \log \mY^\star_k 
    &\propto 
    -\gamma S(\Tilde{Y}) - \frac{1}{2} (\Tilde{Y}-\mu_k)^\top P(\Tilde{Y}-\mu_k),
\end{align}
where $P = \gamma R + (1-\gamma)\Sigma^{-1}$ and $\mu_k = (1-\gamma)P^{-1}\Sigma^{-1}Y_k$. This representation reveals that \(\mY^\star_k\) behaves like a Gaussian whose mean is shifted by the proximal term, modulated by an exponential tilt involving the cost function:
\begin{align}
    \mY^\star_k \propto e^{-\gamma S(\Tilde{Y})}\cN(\Tilde{Y}; \mu_k, P^{-1}).
\end{align}
Applying the gradient identity from \eqref{eq:grad_CE} to the reverse KL-minimizing objective \eqref{eq:CE_prob_step_k} and setting it to zero yields the following result.

\begin{theorem}[Moment-Matching Update]
\label{thm:moment_matching}
The solution to the reverse KL-minimization projection \eqref{eq:CE_prob_step_k} satisfies the condition
\begin{align}
    \nabla_{Y}D_{\mathrm{KL}} \left( \mY^\star_k  \parallel \cN(Y,\Sigma)\right) = \mathbb{E}_{\mY^\star_k}\left[ -\Sigma^{-1}(\Tilde{Y} - Y) \right] = 0,
\end{align}
which admits the closed-form moment-matching update
\begin{align}
\label{eq:moment_matching_update}
    Y_{k+1} = \mathbb{E}_{\mY^\star_k}[ \Tilde{Y} ].
\end{align}
That is, the next iterate is simply the mean of the surrogate distribution \(\mY^\star_k\).
\end{theorem}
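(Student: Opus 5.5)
The plan is to treat \eqref{eq:CE_prob_step_k} exactly as Lemma~\ref{lem:cem}, with the target $\mY^\star$ replaced by the surrogate $\mY^\star_k$. The key point is that Lemma~\ref{lem:cem} never used the specific form of the target, only that it is a fixed normalized density not depending on $Y$ and having a finite first moment.

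First, expand the objective as
\begin{align*}
D_{\mathrm{KL}}\left(\mY^\star_k \parallel \cN(Y,\Sigma)\right) = \mE_{\mY^\star_k}[\log \mY^\star_k(\Tilde{Y})] - \mE_{\mY^\star_k}[\log \cN(\Tilde{Y};Y,\Sigma)].
\end{align*}
The first term is independent of $Y$: the surrogate depends only on $Y_k$, $\eta$, $S$, $R$ and $\Sigma$. The second term equals $\frac{1}{2}\mE_{\mY^\star_k}[(\Tilde{Y}-Y)^\top\Sigma^{-1}(\Tilde{Y}-Y)]$ plus a normalizing constant $\frac12\log\det(2\pi\Sigma)$, which is also independent of $Y$ because $\Sigma$ is fixed.

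Next, differentiate under the expectation. This is justified if $\mY^\star_k$ has finite second moments. That holds because $\mY^\star_k \propto e^{-\gamma S}\cN(\mu_k,P^{-1})$ with $P=\gamma R+(1-\gamma)\Sigma^{-1}\succ 0$, provided $S$ is bounded below, which I assume for the potential $V$. Differentiating gives the stated gradient $\mE_{\mY^\star_k}[-\Sigma^{-1}(\Tilde{Y}-Y)]$, matching \eqref{eq:grad_CE}.

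Finally, the objective is a strictly convex quadratic in $Y$, with Hessian $\Sigma^{-1}\succ0$. Its stationary point is therefore the unique global minimizer. Setting the gradient to zero gives $\Sigma^{-1}(\mE_{\mY^\star_k}[\Tilde{Y}]-Y)=0$. Invertibility of $\Sigma$ then yields $Y_{k+1}=\mE_{\mY^\star_k}[\Tilde{Y}]$.

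The only delicate step is ensuring the surrogate is a proper density with finite moments, which is needed both for the interchange of gradient and expectation and for the existence of the mean. I would handle this through the completed-square form above, using $P\succ0$ together with a lower bound on $S$.
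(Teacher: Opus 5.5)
Your proposal is correct and follows the paper's route. The paper simply applies the gradient identity \eqref{eq:grad_CE} from Lemma~\ref{lem:cem} with $\mY^\star$ replaced by the surrogate $\mY^\star_k$ and sets it to zero, and your additions (the Hessian $\Sigma^{-1}\succ 0$ making the stationary point the unique global minimizer, and $P\succ 0$ with $S$ bounded below ensuring $\mY^\star_k$ is normalizable with finite moments) supply rigor the paper leaves implicit.
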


\subsection{Importance Sampling}
To estimate the expectation in \eqref{eq:moment_matching_update},  we employ importance sampling from a Gaussian proposal distribution. Similar to Eq.~\eqref{eq:importance_sampling_weight}, the weight at iteration \(k\) is
\begin{align*}
    w_k(\Tilde{Y}) &\triangleq \frac{e^{-\gamma S(\Tilde{Y})} \mathcal{N}(\Tilde{Y}; \mu_k, P^{-1})}{\mathcal{N}(\Tilde{Y}; Y_k, P^{-1})}  \propto e^{-\gamma S(\Tilde{Y}) - \Tilde{Y}^T P (Y_k - \mu_k)}. 
\end{align*}
In the common choice \(\Sigma = R^{-1}\), the expression simplifies considerably, as $P = R$ and $\mu_k = (1-\gamma)Y_k$:
\begin{align}
    w_k(\Tilde{Y}) \propto  e^{-\gamma (S(\Tilde{Y}) + \Tilde{Y}^T R Y_k) }.
\end{align}
Using these weights, we obtain a Monte-Carlo estimation of the proximal update:
\begin{align}
    Y_{k+1} \approx \sum_{m=1}^{M}\; \frac{e^{-\gamma (S(Y_k + \varepsilon_m) + \varepsilon_m^T R Y_k)}}{\sum_{j=1}^{M} e^{-\gamma (S(Y_k + \varepsilon_j) + \varepsilon_j^T R Y_k)}} (Y_k + \varepsilon_m), 
\end{align}
where $\varepsilon_m \sim \cN(0, R^{-1}), \;m=1,\dots,M$ are sampled independently. This Monte Carlo estimator completes the proximal inference update, providing a stable and efficient mechanism for refining the Gaussian approximation at each iteration.

\begin{algorithm}[t]
\caption{PISTO}
\label{alg:pisto}
\begin{algorithmic}[1]
\Require Accumulated cost $S(\cdot)$, smoothness matrix $R$, Cholesky factor $L$ with $LL^\top = R^{-1}$, sample size $M$, proximal parameter $\eta$, current iterate $Y_k$, temperature $\tau$
\Ensure Updated iterate $Y_{k+1}$
\State Compute $\gamma \gets \dfrac{\eta}{\eta + 1}$
\State \textbf{Sample:} Draw $\{\varepsilon_m\}_{m=1}^M$ where $\varepsilon_m = L z_m$, \quad $z_m \sim \mathcal{N}(0, I)$
\State \textbf{Evaluate:} For each $m = 1, \ldots, M$, compute importance weights
\[
w_m \gets \exp\!\Big(-\frac{\gamma}{\tau} \, S(Y_k + \varepsilon_m) - (\varepsilon_m)^\top R \, Y_k \Big)
\]
\State \textbf{Normalize:} Compute normalized weights
\[
\bar{w}_m \gets \frac{w_m}{\sum_{j=1}^M w^{(j)}}, \quad \forall m
\]
\State \textbf{Update:} Compute weighted mean
\[
Y_{k+1} \gets Y_k + \sum_{m=1}^M \bar{w}_m \, \varepsilon_m
\]
\State \Return $Y_{k+1}$
\end{algorithmic}
\end{algorithm}

\begin{figure*}[h]
    \centering
    \begin{subfigure}{0.23\textwidth}
    \centering
    \includegraphics[width=0.9\textwidth]{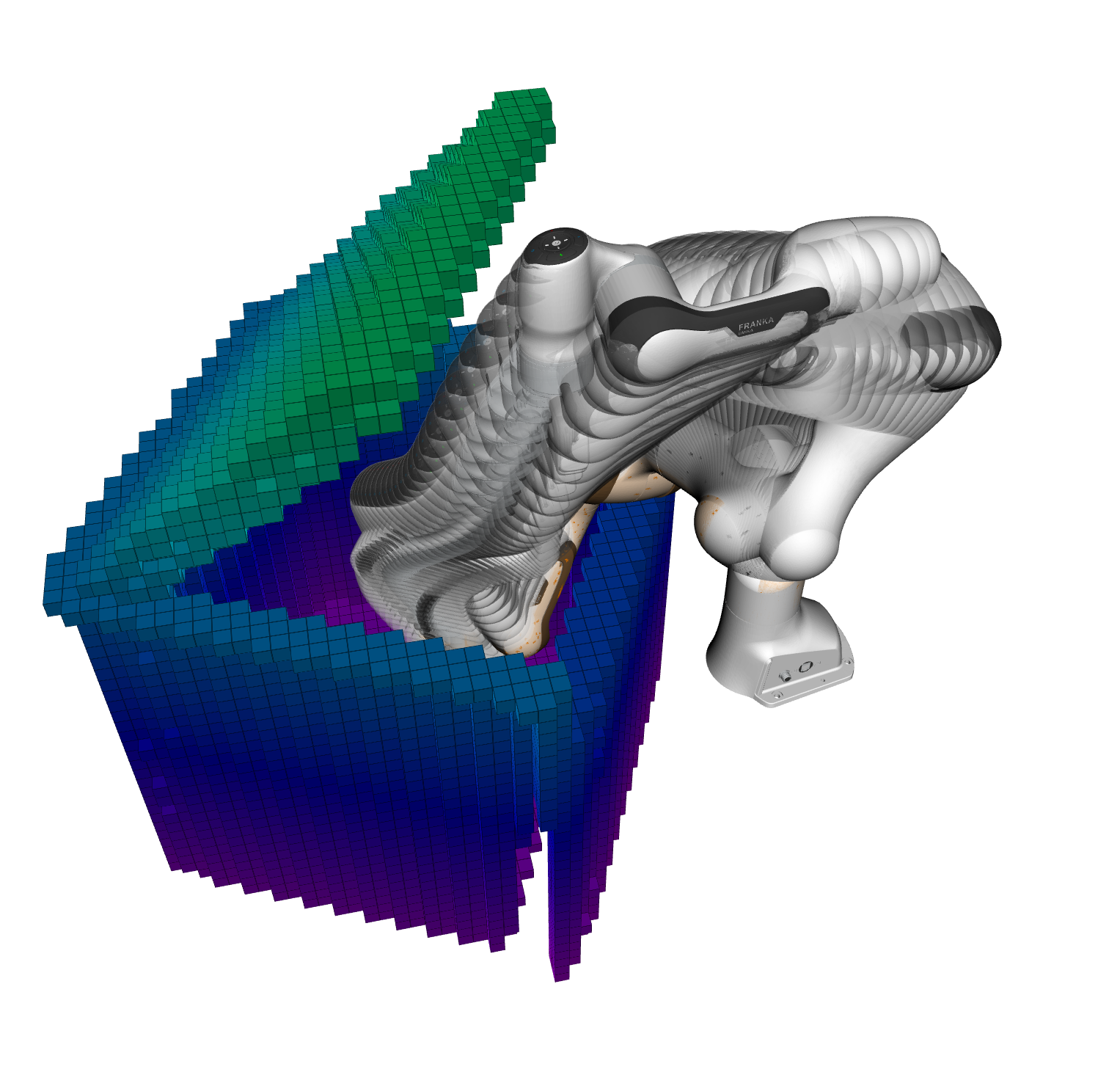}
    \caption{Box Scene}
    \end{subfigure}
    \begin{subfigure}{0.23\textwidth}
    \centering
    \includegraphics[width=0.95\textwidth]{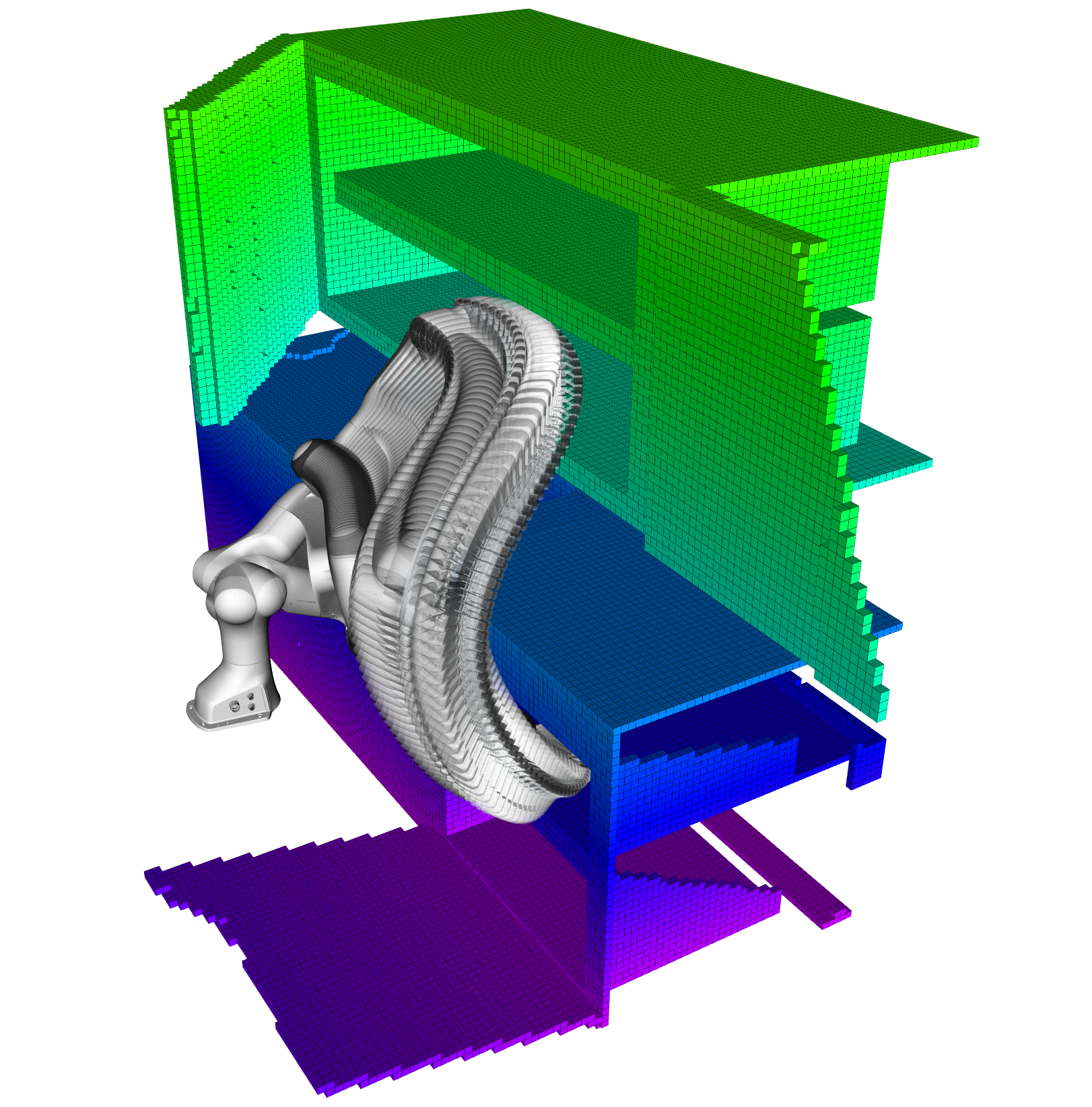}
    \caption{Kitchen Scene}
    \end{subfigure}
    \begin{subfigure}{0.23\textwidth}
    \centering
    \includegraphics[width=0.9\textwidth]{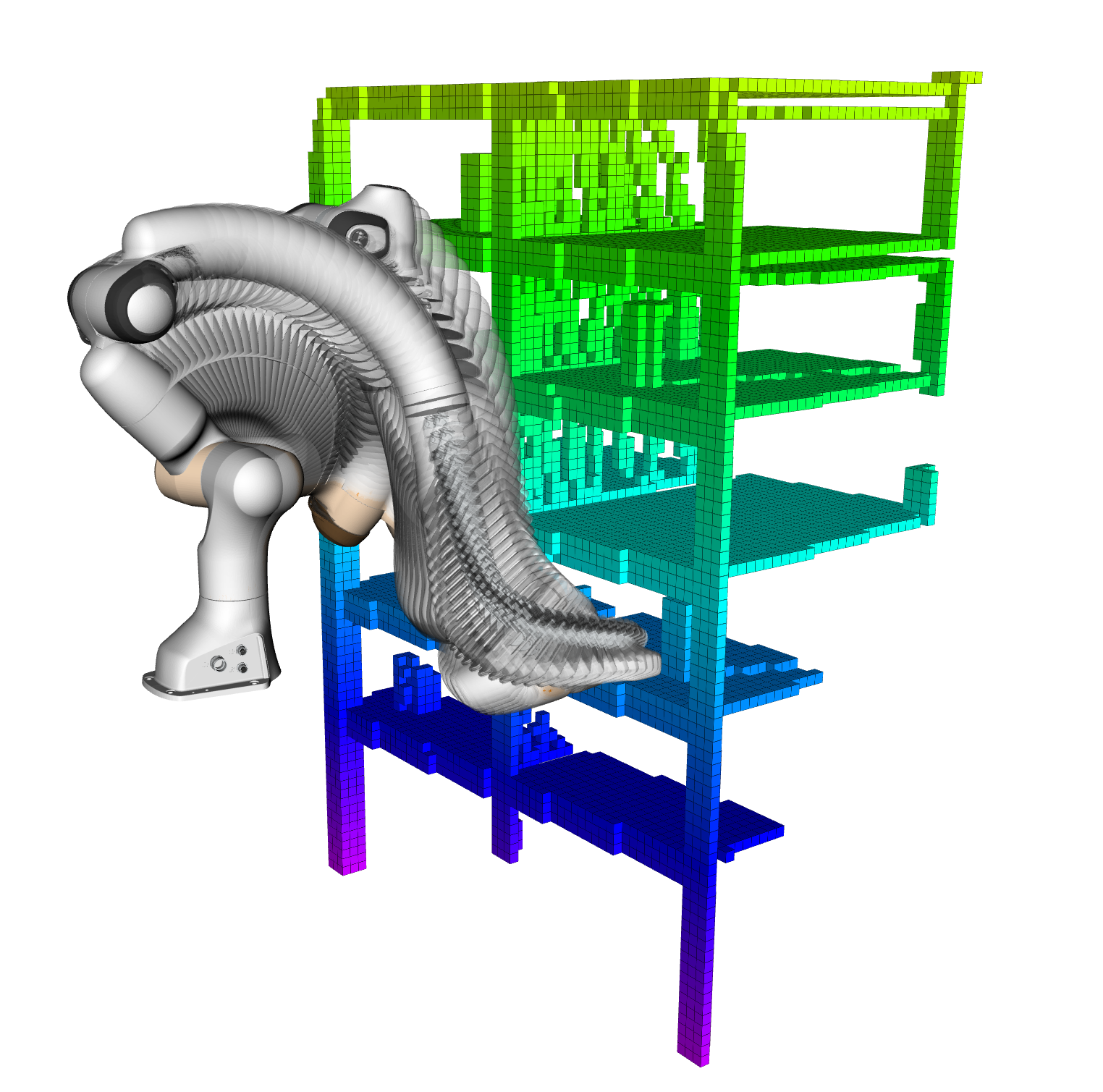}
    \caption{Bookshelf Scene}
    \end{subfigure}
    \begin{subfigure}{0.23\textwidth}
    \centering
    \includegraphics[width=0.9\textwidth]{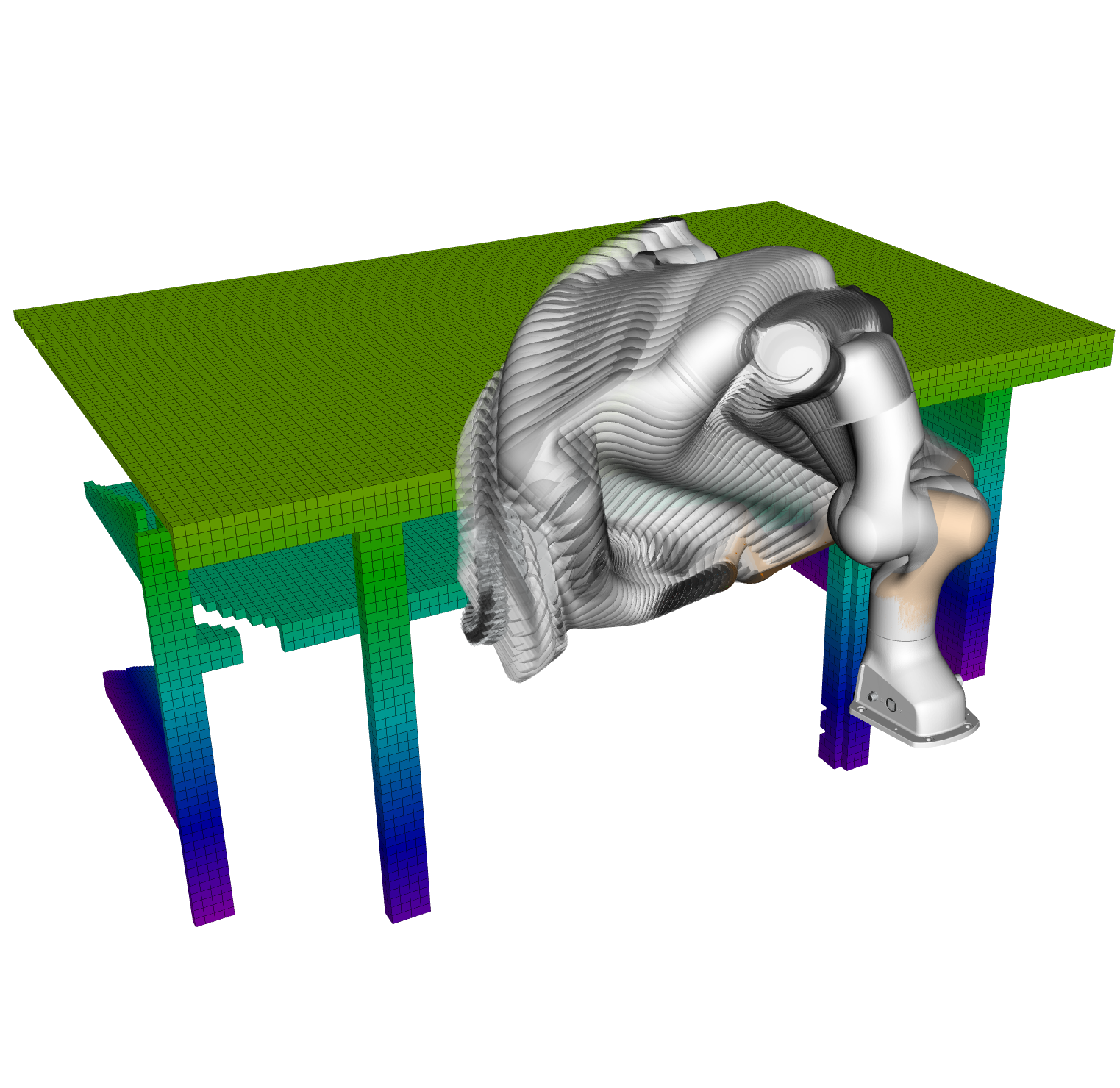}
    \caption{Table Scene}
    \end{subfigure}
    \caption{Results of PISTO in different motion planning benchmarking scenes.}
\end{figure*}

\begin{figure*}[h]
    \centering
    \begin{subfigure}{\linewidth}
    \centering
        \includegraphics[width=0.9\linewidth]{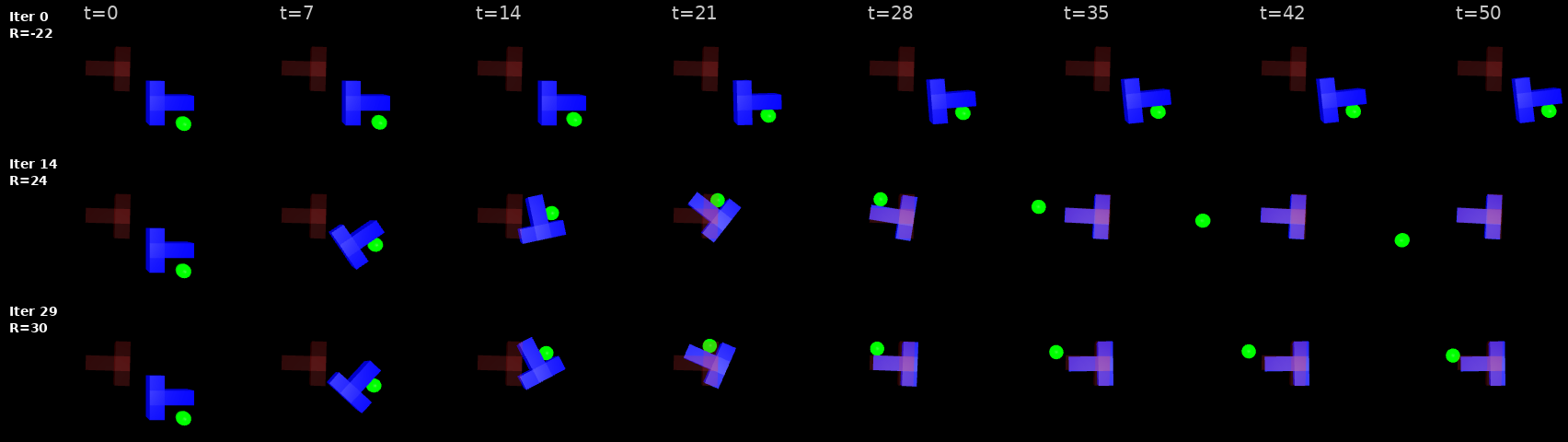}
    \caption{The PushT Task. Optimization time: $260.62 (s)$. }
    \end{subfigure}
    \begin{subfigure}{\linewidth}
    \centering
    \vspace{0.2 cm}
        \includegraphics[width=0.9\linewidth]{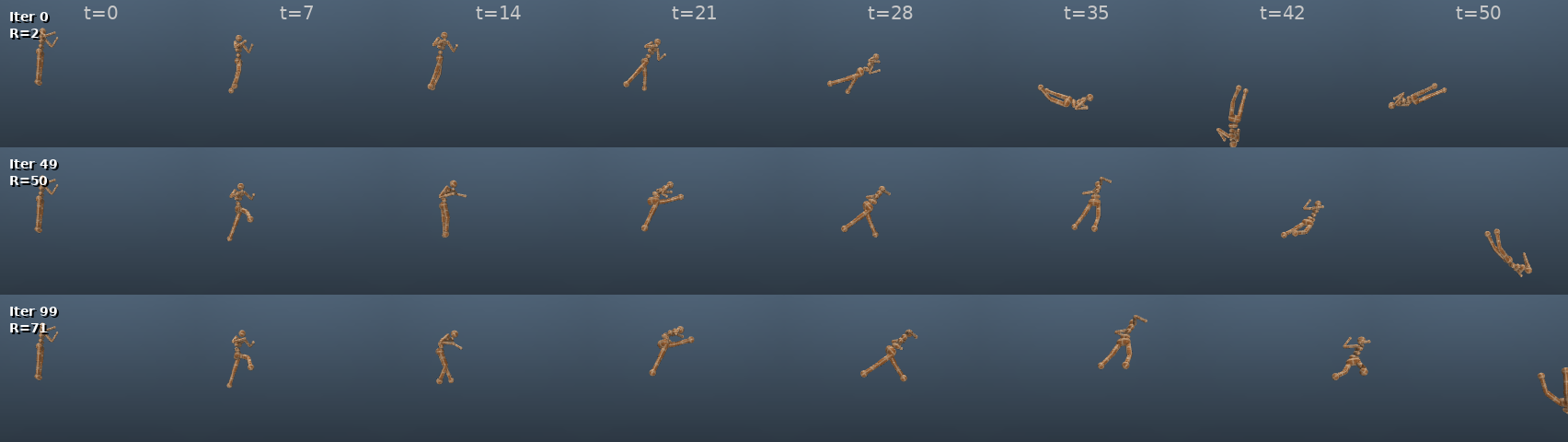}
    \caption{The Humanoid Running Task. Optimization time: $142.21 (s)$.}
    \end{subfigure}
    \begin{subfigure}{\linewidth}
    \centering
    \vspace{0.2 cm}
        \includegraphics[width=0.9\linewidth]{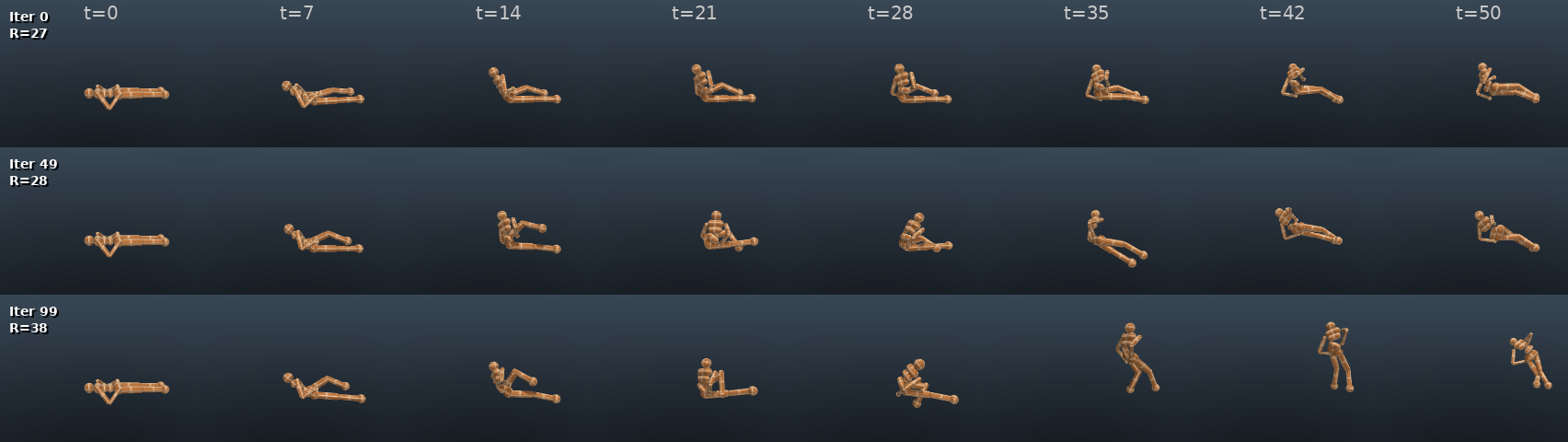}
    \caption{The Humanoid Standing Up Task. Optimization time: $168.11 (s)$. }
    \end{subfigure}
    \caption{The optimization results for contact-rich tasks.}
    \label{fig:humanoidstandup}
\end{figure*}

\begin{figure*}[h]
    \centering
    \begin{subfigure}{0.225\textwidth}
    \centering
    \includegraphics[width=\textwidth]{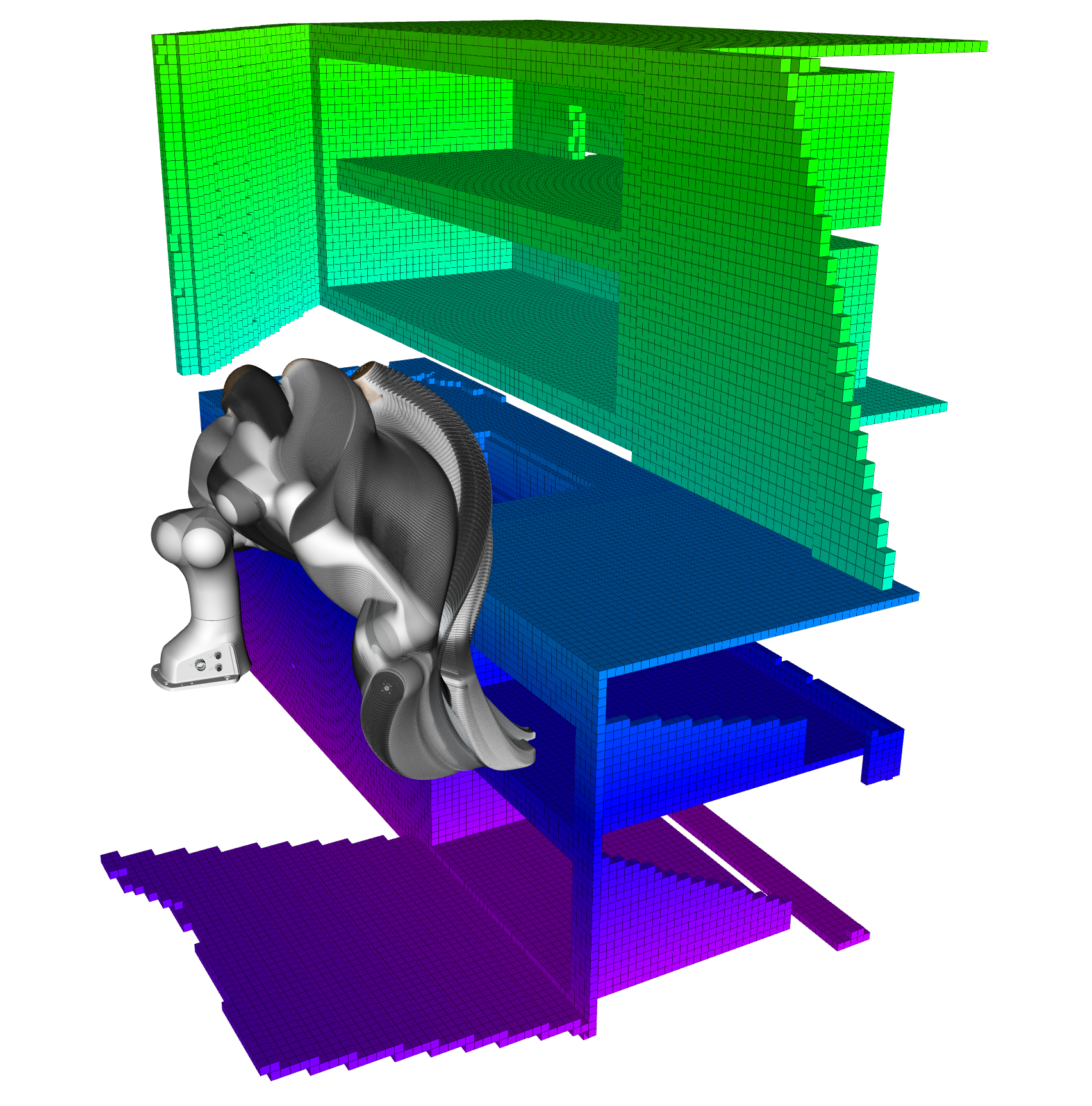}
    \caption{PISTO}
    \end{subfigure}
    \begin{subfigure}{0.225\textwidth}
    \centering
    \includegraphics[width=\textwidth]{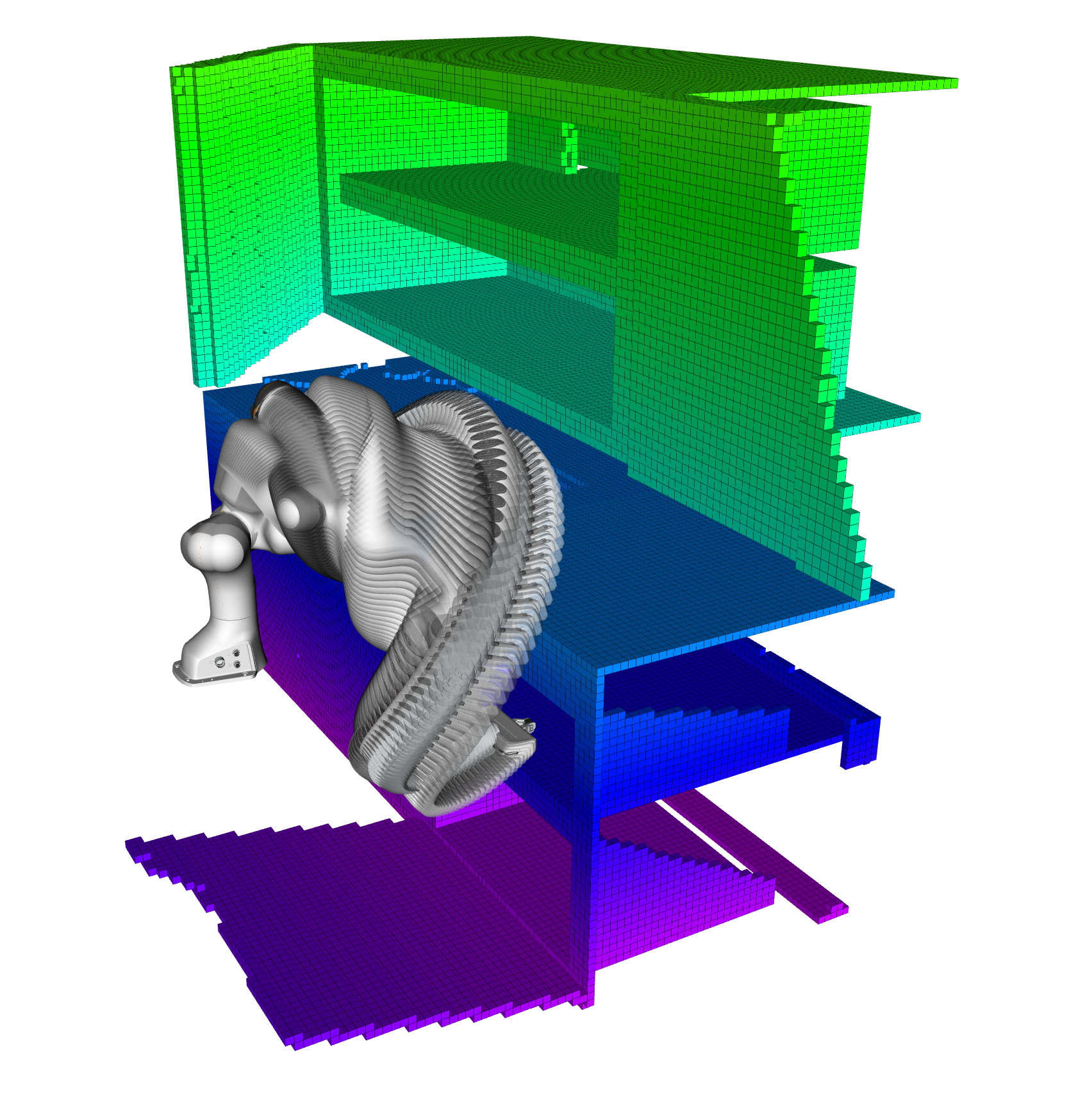}
    \caption{NGD}
    \end{subfigure}
    \begin{subfigure}{0.225\textwidth}
    \centering
    \includegraphics[width=\textwidth]{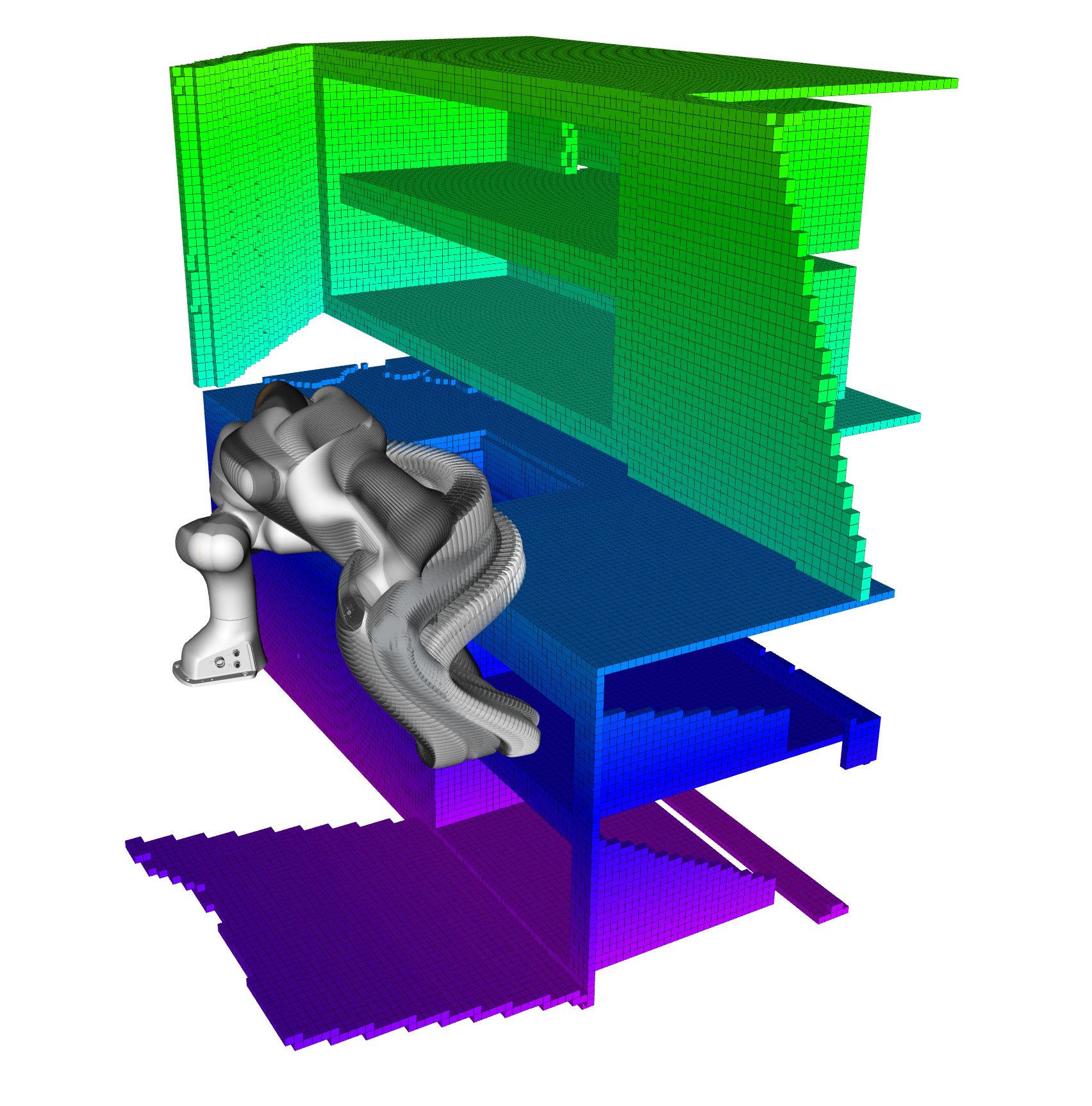}
    \caption{STOMP}
    \end{subfigure}
    \begin{subfigure}{0.225\textwidth}
    \centering
    \includegraphics[width=\textwidth]{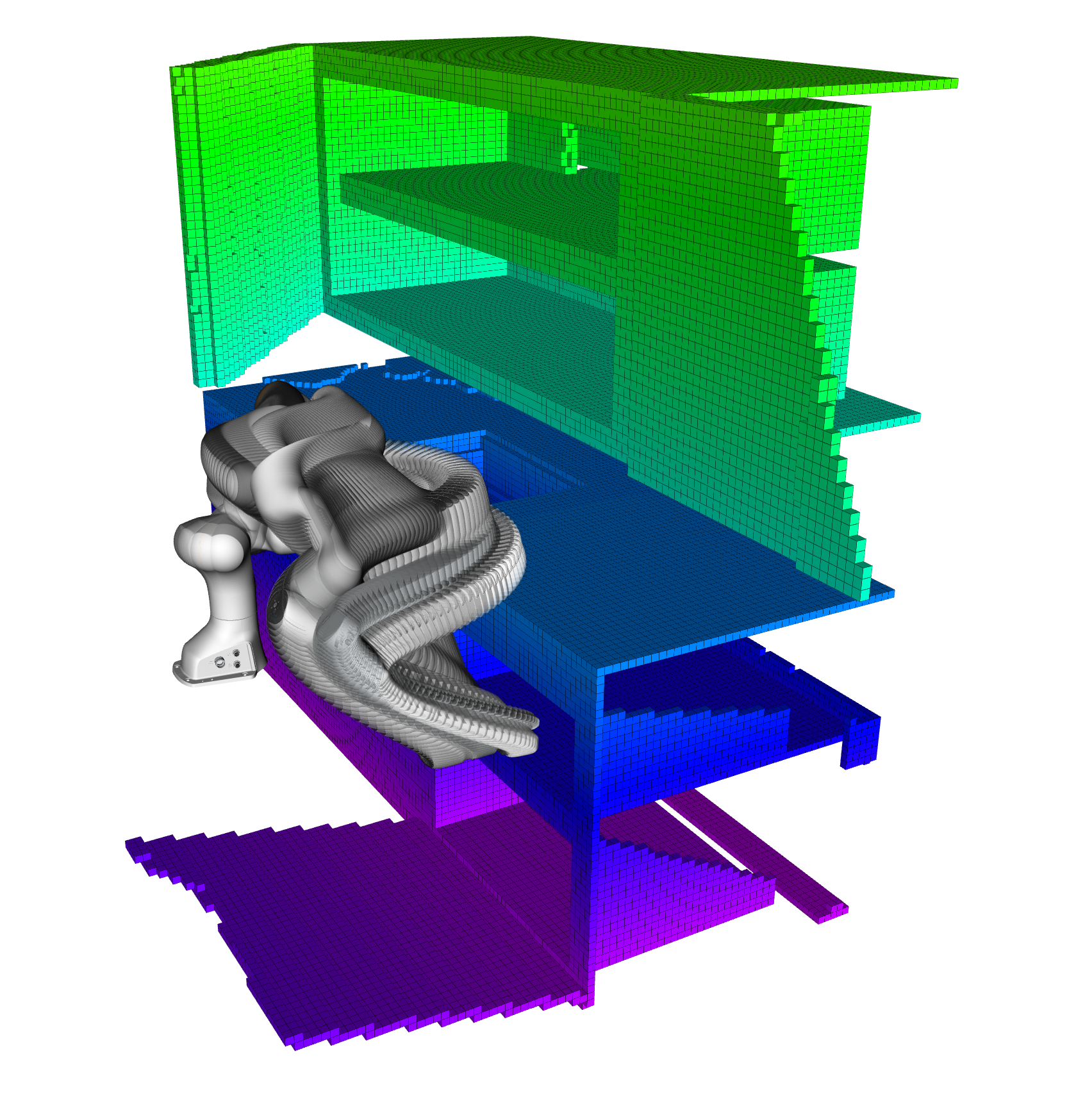}
    \caption{CHOMP}
    \end{subfigure}
    \caption{Results of different planners in the Kitchen scene in the database.}
\end{figure*}

\subsection{Policy Optimization for Contact-Rich Tasks}
\label{sec:rl_to}
We now extend PISTO to policy optimization in the action space, where the decision variable is the \emph{control sequence} $U = \{u_t\}_{t=0}^{T-1}$ and states evolve according to known dynamics models.

\paragraph{Formulation}
Given initial state $x_0 \in \mR^{d_x}$, consider the finite-horizon optimal control problem:
\begin{equation}
\label{eq:mpc_objective}
\min_{U} \; \sum_{t=0}^{T-1} c(x_t, u_t)
\end{equation}
subject to the dynamics and control constraints:
\begin{align}
x_{t+1} &= g(x_t, u_t), \quad t = 0, \ldots, T-1, \label{eq:dynamics} \\
u_t &\in \mathcal{U} \subseteq \mR^{d_u}, \quad t = 0, \ldots, T-1, \label{eq:control_bounds}
\end{align}
where $U = (u_0, \ldots, u_{T-1}) \in \mR^{T \times d_u}$, $c: \mR^{d_x} \times \mR^{d_u} \to \mR$ is the stage cost, $g: \mR^{d_x} \times \mR^{d_u} \to \mR^{d_x}$ is the dynamics model, and $\mathcal{U} = [\underline{u}, \bar{u}]$ is the admissible control set.

\paragraph{Composite Structure of the Objective}

The state trajectory is uniquely determined by the initial condition and control sequence via recursive application of \eqref{eq:dynamics}. We formalize this through the \emph{flow map} $\phi_t: \mR^{d_x} \times \mR^{t \times d_u} \to \mR^{d_x}$:
\begin{equation}
\label{eq:flow_map}
x_t = \phi_t(x_0, u_{0:t-1}) \triangleq \underbrace{g \circ g \circ \cdots \circ g}_{t \text{ times}}(x_0, u_{0:t-1}),
\end{equation}
with the convention $\phi_0(x_0) = x_0$. Substituting into \eqref{eq:mpc_objective} and defining the \emph{rollout cost}
\begin{equation}
\label{eq:rollout_cost}
S(U; x_0) \triangleq \sum_{t=0}^{T-1} c\bigl(\phi_t(x_0, u_{0:t-1}), u_t\bigr),
\end{equation}
the optimal control problem reduces to the form of \eqref{eq:obj_stomp}:
\begin{equation}
\label{eq:policy_opt_stomp}
\min_U \; \mE_{\tilde{U}} \left[ S(\tilde{U}; x_0) + \frac{1}{2} \tilde{U}^\top R \tilde{U} \right],
\end{equation}
where the quadratic term $\frac{1}{2} \tilde{U}^\top R \tilde{U}$ regularizes the control sequence for temporal smoothness. Algorithm~\ref{alg:pisto} applies directly in this setting, with each sample $\tilde U^{(i)} \sim \cN(U, \Sigma)$ evaluated by rolling out the dynamics $\phi_t$ to obtain $S(\tilde U^{(i)}; x_0)$. This formulation enables PISTO to optimize directly in control space without requiring differentiability of $g$ or $c$, and admits efficient parallelization of rollouts on modern GPU hardware.

\subsection{Covariance and Proximal Step Size Annealing}

To balance exploration and exploitation, we implement covariance scheduling with adaptive temperature scaling. We scale the smoothness matrix, $\bar{R} = \sigma_k \times R$, and use $\bar{R}$ matrix as the actual matrix that we sample from. The covariance scale $\sigma_k$ follows cosine annealing:
\begin{equation}
    \sigma_k = \sigma_{\text{final}} + \frac{1}{2}(\sigma_{\text{init}} - \sigma_{\text{final}})\left(1 + \cos\left(\frac{\pi k}{\Kmax}\right)\right),
\end{equation}
where $k$ is the current iteration and $\Kmax$ the maximum. We introduce an adaptive annealing scheme for the proximal step size parameter $\eta$ governing the exploration-exploitation trade-off in importance-weighted trajectory optimization. We also introduce an additional temperature parameter $\tau$ to scale the impact of the adaptive proximal step size. The importance weights are $w_m \propto \exp\left(-\frac{\gamma}{\tau} S(Y_k + \varepsilon_m) - (\varepsilon_m)^\top R Y_k\right)$. We anneal $\eta$ from small to large values via an exponential schedule $\eta(t) = \eta_{\text{initial}} \cdot (\eta_{\text{final}} / \eta_{\text{initial}})^{t/T}$, where small $\eta$ encourages exploration through nearly uniform weights while large $\eta$ exploits high-reward samples with peaked weights. The temperature $\tau$ amplifies only the energy term, leaving regularization unaffected, enabling independent control over reward sensitivity and trajectory smoothness
\footnotetext{For STOMP, we employed the official implementation \url{https://github.com/ros-industrial/stomp}}.
\begin{figure*}[h]
    \centering
    \begin{subfigure}{0.225\textwidth}
    \centering
    \includegraphics[width=\textwidth]{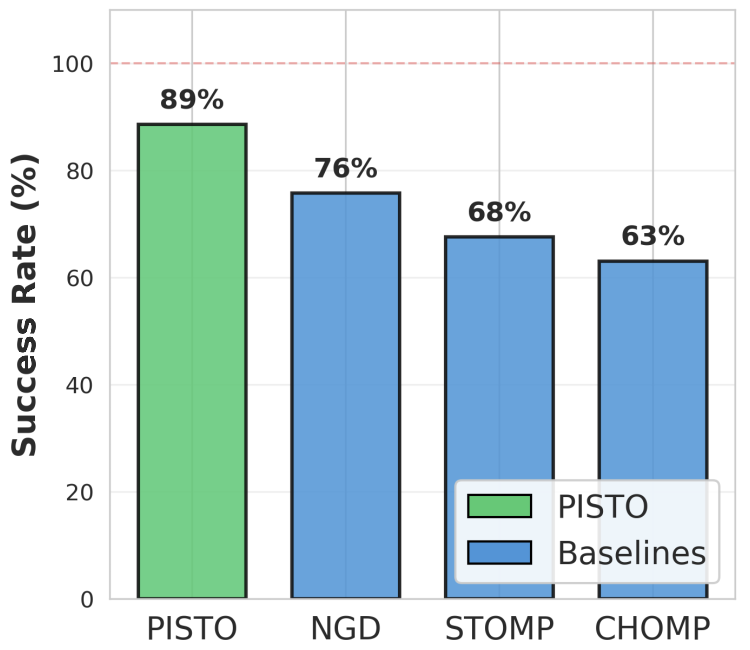}
    \caption{Success Rate}
    \end{subfigure}
    \begin{subfigure}{0.225\textwidth}
    \centering
    \includegraphics[width=\textwidth]{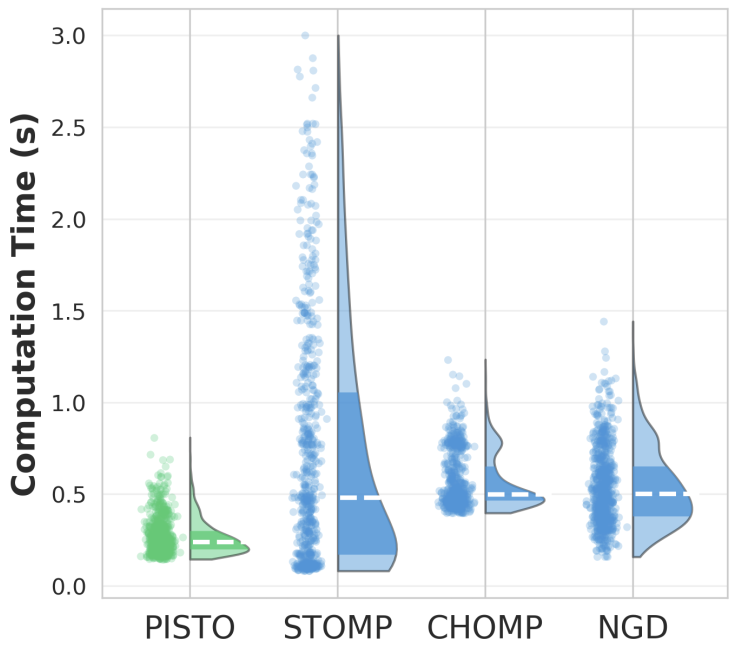}
    \caption{Planning Time}
    \end{subfigure}
    \begin{subfigure}{0.225\textwidth}
    \centering
    \includegraphics[width=\textwidth]{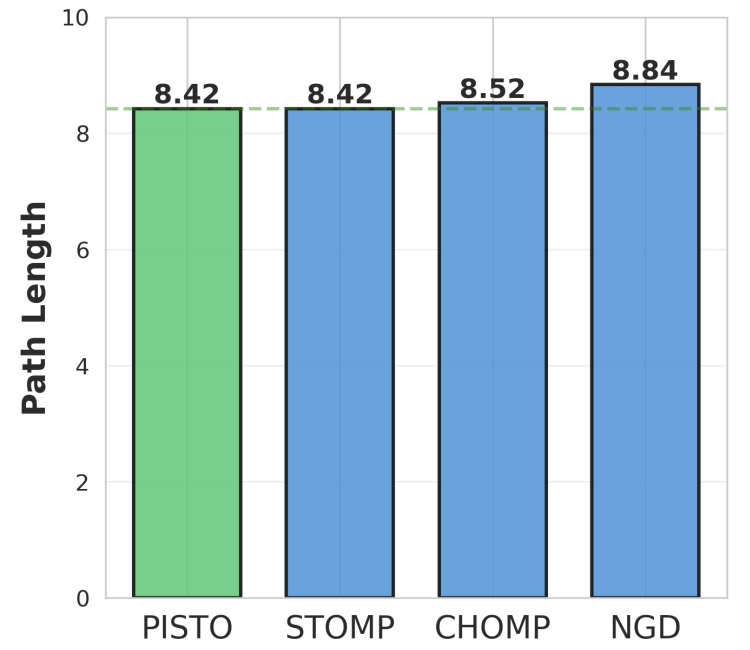}
    \caption{Path Length}
    \end{subfigure}
    \begin{subfigure}{0.225\textwidth}
    \centering
    \includegraphics[width=\textwidth]{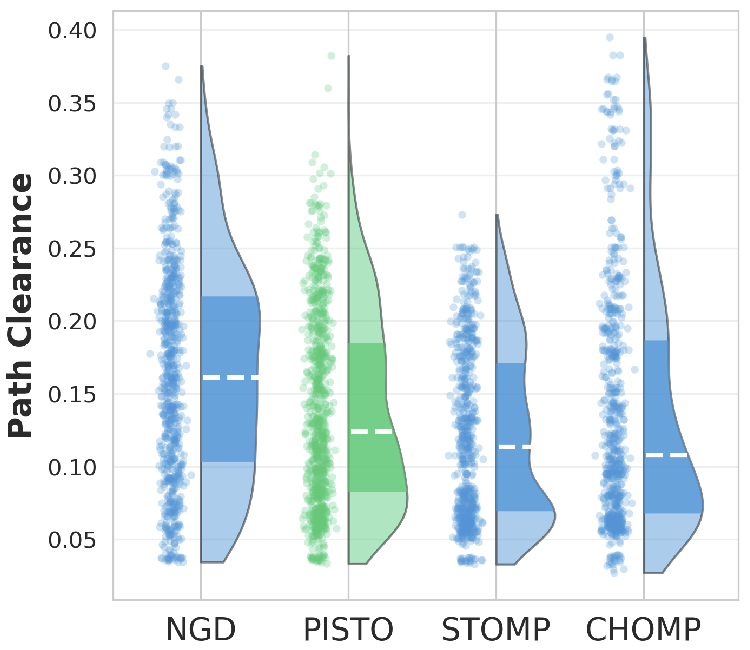}
    \caption{Path Clearance}
    \end{subfigure}
    \caption{Benchmark Performance Statistics}
    \label{fig:benchmarking_result}
\end{figure*}

\section{Experiments}
All experiments were run on a computer with an Intel Core i7-12800H CPU. The MuJoCo experiments were run on a computer with an NVIDIA RTX 4090 GPU. The code for this paper is implemented in C++ for the motion planning tasks, and in Python for the trajectory optimization tasks in MuJoCo. 

\subsection{Motion Planning for Robot Arms}
For collision avoidance, we define the state cost $V(Y_t)$ using two formulations. The \emph{signed-distance} cost approximates the robot as a union of spheres and evaluates
\begin{equation}
    V_{\rm coll}(Y_t) = \big\| h_{\delta}\left( \dsdf(F(Y_t)) \right) \big\|_{\Sigma_{\rm obs}}^{2},
\end{equation}
where $F(\cdot)$ is the forward kinematics, $\dsdf(\cdot)$ queries a precomputed signed distance field (SDF), and the hinge function $h_{\delta}(\cdot)$ penalizes penetrations within margin $\delta$. The non-differentiable \emph{indicator} cost, used in MoveIt benchmarking experiments, imposes a fixed penalty on detected collision:
\begin{equation}
    V_{\rm coll}(Y_t) = W_{\rm obs}\, \mathbf{1}_{\mathrm{CollisionDetected}(Y_t)}.
\end{equation}
The weights $\Sigma_{\rm obs}$ and $W_{\rm obs}$ are task-dependent hyper-parameters.

\subsection{Policy Optimization Tasks}
We tested the PISTO algorithm on various tasks defined in MuJoCo~\cite{todorov2012mujoco}. We used Bayesian optimization-based parameter sweeping to obtain the recommended parameters for each task. Figure~\ref{fig:humanoidstandup} illustrates the optimization process for a standing up task for a $17$-DOF humanoid robot. Table~\ref{tab:combined_benchmarking} records the achieved rewards for different tasks.
\textit{Planning and Runtime Results: }
Table \ref{tab:combined_benchmarking} records PISTO's performance in contact-rich trajectory optimization tasks, compared with CEM and MPPI methods. The results are averaged over $50$ independent runs. Our proposed PISTO algorithm significantly outperforms both baselines across all five contact-rich tasks, achieving approximately $1.5\times$ improvement on Walker2d and $2.8\times$ on HumanoidRun compared to the best baseline, while converting negative MPPI performance on PushT ($-0.17$) into a positive reward of $0.46$. PISTO also runs $1.4\times$--$3.2\times$ faster than baselines on most tasks, with particularly notable speedups on HumanoidRun ($36.5$s vs.\ $\sim$115s). The tight standard deviations indicate reliable convergence despite discontinuous contact dynamics. Notably, PISTO achieves successful results for the $17$-DOF humanoid running and standing-up tasks within minutes. These results validate PISTO's robustness for trajectory optimization in hybrid dynamical systems.

\begin{table*}[htbp]
\centering
\caption{Benchmarking Results: Motion Planning and Trajectory Optimization}
\label{tab:combined_benchmarking}
\begin{tabular}{llcccccccc}
\toprule
\textbf{Task} & \textbf{Method} & &\textbf{Success/Reward} & \textbf{Time (s)} & \textbf{Path Length} & \textbf{Path Clearance} \\
\midrule
\multicolumn{6}{l}{\textit{Motion Planning (7-DOF Manipulator)}} \\
\midrule
& CHOMP & & 63.05\% & 0.498 & 8.519 & 0.108 \\
& STOMP & & 67.59\% & \underline{0.481} & \underline{8.424} & 0.113 \\
& NGD & & \underline{75.76}\% & 0.499 & 8.838 & \textbf{0.161} \\
& PISTO & & \textbf{88.57}\% & \textbf{0.237} & \textbf{8.421} & \underline{0.124} \\
\midrule
& \multicolumn{3}{c}{\textbf{Reward} (Per step)} & \multicolumn{3}{c} {\textbf{Runtime}} \\
\cmidrule(lr){2-4} \cmidrule(lr){5-8} 
\textbf{Task} & PISTO & CEM & MPPI & PISTO & CEM & MPPI & \\
\midrule
\multicolumn{8}{l}{\textit{Trajectory Optimization for Contact-Rich Tasks}} \\
\midrule
 PushT             & $\textbf{0.4559} \pm 0.1291$ & $0.1004 \pm 0.1416$ & $-0.1715 \pm 0.1407$ & $134.6235s \pm 0.8061 s$ & $229.17s \pm 1.48s$ & $228.55s \pm 1.44s$ &\\
 Hopper             & $\textbf{1.2645} \pm 0.0201$ & $0.6931 \pm 0.1390$ &  $0.9195 \pm 0.1222$ & $49.13s ± 1.51s$ & $58.76s \pm 2.61s$  & $58.53s \pm 2.47s $  & \\
 Walker2d          & $\textbf{1.2622} \pm 0.0828$ & $0.8603 \pm 0.1107$ & $0.7505 \pm 0.1642$ & $65.99s \pm 1.50s$ & $75.13s \pm 2.57s$ & $75.14s \pm 2.72s$ &\\
 HumanoidRun       & $\textbf{1.3385} \pm 0.2921$ & $0.4106 \pm 0.2509$ & $0.485 \pm 0.3104$ & $36.48s \pm 1.64s$ &  $117.62s \pm 14.42s$ & $112.91s \pm 12.98s$ &\\
 HumanoidStandUp   & $\textbf{0.679} \pm 0.058$ & $0.4609 \pm 0.008$ & $0.529 \pm 0.0390$ & $65.29s \pm 2.02s$ & $55.18s \pm 1.44s$ & $54.24s \pm 2.94s$ & \\
\bottomrule
\end{tabular}
\end{table*}

\subsection{Benchmarking with MoveIt Motion Planning Algorithms}
% \paragraph{Benchmarking with General-purpose Nonlinear Optimization Solvers}
% To isolate the performance of our optimization approach, we benchmark the deterministic mean trajectory solver against standard nonlinear optimizers. For the collision cost $V_{\text{coll}}$, we adopt the smooth signed-distance formulation with hinge function $h_{\delta}$, ensuring differentiability for gradient-based optimization. All gradients are computed using CasADi's automatic differentiation \cite{andersson2019casadi}. We compare against IPOPT \cite{Wachter2006}, L-BFGS \cite{Liu1989}, and SQP \cite{boggs1995sequential}. The results showcase that PISTO is less sensitive to the initialization. Thanks to the smoothness matrix $R$ which enforces correlations between states, PISTO produces smoother trajectories compared with baselines. As a result, PISTO's results are more desirable than those of the vanilla general-purpose solvers.  

% \paragraph{Benchmarking with other Motion Planning Algorithms in MoveIt}
We further benchmark PISTO against several representative optimization-based baselines using MoveIt, including the official implementations of STOMP and CHOMP, as well as a natural gradient descent (NGD) method \cite{yu2023gvimp}.
Experiments are conducted on the Franka Emika Panda across seven manipulation environments from the MotionBenchMaker~\cite{chamzas2021motionbenchmaker} dataset, including \emph{Kitchen}, \emph{Bookshelf Tall}, \emph{Bookshelf Thin}, \emph{Table Pick}, \emph{Table Under Pick}, \emph{Box}, and \emph{Cage}, with over 300 planning tasks in total. 

For a fair comparison, all planners are initialized using the same joint-space straight-line interpolation between the start and goal configurations. We repeat each planning task 20 times with different random seeds. A run is considered successful if the resulting trajectory is collision-free and satisfies joint limits. Figure \ref{fig:benchmarking_result} summarizes the benchmarking results in terms of success rate, planning time, path length, and path clearance.

\section{Conclusion}
\label{sec:conclusion}
We presented the Proximal Inference for Stochastic Trajectory Optimization (PISTO), a principled algorithm for motion planning formulated as Gaussian variational inference. By revealing STOMP's implicit variational structure, we introduced a proximal formulation that regularizes updates via KL penalties, yielding closed-form moment-matching updates amenable to importance sampling. The resulting algorithm is simple, derivative-free, and parallelizable. Experiments demonstrated that PISTO achieves 89\% success on motion planning benchmarks—outperforming CHOMP, STOMP, and natural gradient baselines—while additional MuJoCo experiments validated its effectiveness for high-dimensional, contact-rich tasks. Future work includes incorporating gradients when available, jointly optimizing mean and covariance, and extending to receding-horizon MPC.

\appendix

\subsection{Proofs}
\label{sec:appendix_proofs}

\paragraph{Proof of Theorem \ref{thm:stomp_vi}}
Define the accumulated state cost $S(\Tilde{Y}) = \sum_{t=0}^T V(\Tilde{Y}_t)$. Since the covariance $\Sigma$ is fixed, the expectation of the quadratic deviation term simplifies via the trace identity $\mathbb{E}[(\Tilde{Y} - Y)^\top R (\Tilde{Y} - Y)] = \operatorname{tr}(R\Sigma)$. Thus,
\begin{equation*}
\mathcal{J}_1 = \mathbb{E}_{\Tilde{Y}}[S(\Tilde{Y})] + \frac{1}{2}\operatorname{tr}(R\Sigma) + \frac{1}{2}Y^\top R Y.
\end{equation*}

We recognize that $\mathbb{E}_{\Tilde{Y}}[S(\Tilde{Y})] = -\mathbb{E}_{\Tilde{Y}}[\log e^{-S(\Tilde{Y})}]$. Combining this with the KL divergence between Gaussians,
\begin{align*}
&D_{\mathrm{KL}}\left( \mathcal{N}(Y, \Sigma) \| \mathcal{N}(0, R^{-1}) \right) \nonumber
\\
=& \frac{1}{2}\left( \operatorname{tr}(R\Sigma) + Y^\top R Y - \log\det(\Sigma R) - (T+1) \right)
\end{align*}
where the last two terms are constants, we obtain
\begin{equation}
\mathcal{J}_1 = D_{\mathrm{KL}}\left( \mathcal{N}(Y, \Sigma) \| \mY^\star \right) + \text{const},
\end{equation}
by introducing the un-normalized $\mY^\star \propto e^{-S(\Tilde{Y})} \mathcal{N}(0, R^{-1})$. Here, the constant is independent of $Y$. Since only the mean $Y$ is optimized, minimizing $\mathcal{J}_1$ is equivalent to minimizing the KL divergence to the target posterior $\mY^\star$.

\paragraph{Proof of Theorem \ref{thm:prox_vi}}
Let $q_Y$ and $q_{Y_k}$ denote the density functions of $\cN(Y, \Sigma)$ and $\cN(Y_k, \Sigma)$, respectively. The proximal objective can be written as
\begin{align*}
\mathcal{J}_k &= D_{\mathrm{KL}} \left( \cN(Y,\Sigma) \parallel \mY^\star \right) + \frac{1}{\eta} D_{\mathrm{KL}}\left(\cN(Y,\Sigma) \parallel \cN(Y_k,\Sigma)\right) \nonumber \\
&= \mE_{\cN(Y,\Sigma)}\left[ \left( 1 + \frac{1}{\eta} \right) \log q_Y - \log \mY^\star - \frac{1}{\eta} \log q_{Y_k} \right].
\end{align*}
Factoring out the coefficient $\frac{\eta+1}{\eta}$ yields
\begin{align*}
\mathcal{J}_k &= \frac{\eta+1}{\eta} \, \mE_{\cN(Y,\Sigma)} \left[ \log q_Y - \log\left( (\mY^\star)^{\frac{\eta}{\eta+1}} (q_{Y_k})^{\frac{1}{\eta+1}} \right) \right] \nonumber \\
&\propto D_{\mathrm{KL}} \left( \cN(Y, \Sigma) \parallel \mY^\star_k \right),
\end{align*}
where we identify the surrogate distribution as
\begin{equation*}
\mY^\star_k \propto (\mY^\star)^{\frac{\eta}{\eta+1}} (q_{Y_k})^{\frac{1}{\eta+1}}.
\end{equation*}
Substituting the explicit form of $\mY^\star$ completes the proof.

\subsection{Implementation Details}
% \paragraph{Smoothness Matrix Normalization}
% To ensure the independence of matrix $R$ from the total planning time, we discretize the \textit{normalized} curvature cost $J_{\rm smooth} = \int_0^1 (d^2y/ds^2)^2 \, ds$ using central finite differences with spacing $\Delta s = 1/(N-1)$. The second-difference operator $\tilde{D} \in \mathbb{R}^{(N-2) \times N}$ applies the stencil $[1, -2, 1]$ at interior points, yielding $R = \frac{1}{\Delta s^3} \tilde{D}^\top \tilde{D} + \epsilon_{\rm reg} I$, where the scaling $(N-1)^3$ ensures \textit{discretization-invariant} cost and $\epsilon_{\rm reg} I$ provides numerical stability. The resulting symmetric pentadiagonal structure enables efficient sampling from the Gaussian prior $\mathcal{N}(0, R^{-1})$ via Cholesky factorization, biasing trajectories toward smooth, low-curvature solutions.

\paragraph{Elite Sample Selection}

To improve convergence and reduce variance, we incorporate elite-set selection within the importance sampling step. Given $M$ rollouts, we define an elite subset $\mathcal{M}_e \subset \{1, \dots, M\}$ containing samples in the lowest $\Kelite$-th percentile of cost. Importance weights for $m \notin \mathcal{M}_e$ are set to zero, while for $m \in \mathcal{M}_e$:
\begin{equation}
    w_m = \frac{\exp(-\gamma J(Y_m) / \tau)}{\sum_{j \in \mathcal{M}_e} \exp(-\gamma J(Y_j) / \tau)},
\end{equation}
where $J(\cdot)$ is the trajectory cost and $\tau$ is an adaptive temperature. This selective pressure ensures updates are driven by successful explorations, enabling efficient navigation of non-convex cost landscapes.

\paragraph{Momentum-Accelerated Exponential Moving Average}
To stabilize convergence and accelerate optimization, we apply a momentum-based update scheme. Let $\hat{Y}_{k+1}$ denote the candidate trajectory computed as the weighted expectation over the elite set $\hat{Y}_{k+1} = \sum_{m \in \mathcal{M}_e} w_m (Y_k + \varepsilon_m).$ We compute the update direction $\Delta_k = \hat{Y}_{k+1} - Y_k$ and maintain a momentum buffer $v_k$ via exponential moving average:
\begin{equation*}
    v_{k+1} = \beta v_k + (1 - \beta) \Delta_k,
\end{equation*}
where $\beta \in [0, 1)$ is the momentum decay coefficient. The trajectory is then updated as: $Y_{k+1} = Y_k + \lambda v_{k+1},$ where $\lambda \in (0, 1]$ is the step size. This two-stage temporal regularization dampens oscillations through directional smoothing while providing momentum for navigating non-smooth cost regions. We also allow the Adam-type \cite{kingma2014adam} gradient update rule as an option.

\bibliographystyle{plainnat}
\bibliography{references}

\end{document}